\newtheorem{theorem}{Theorem}
\newtheorem{proposition}{Proposition}
\DeclareMathOperator*{\diag}{diag}
\DeclareMathOperator*{\Vol}{Vol}
\DeclarePairedDelimiterX{\infdivx}[2]{(}{)}{%
	#1\;\delimsize|\delimsize|\;#2%
}
\newcommand{\kld}[2]{\ensuremath{\mathbb{KL}\infdivx{#1}{#2}}\xspace}
\def\given{\middle\vert}
\icmltitlerunning{Perceptual Generative Autoencoders}
\begin{document}
	
	\twocolumn[
	\icmltitle{Perceptual Generative Autoencoders}
	
	
	
	
	\begin{icmlauthorlist}
		\icmlauthor{Zijun Zhang}{ucalgary}
		\icmlauthor{Ruixiang Zhang}{mila}
		\icmlauthor{Zongpeng Li}{whu}
		\icmlauthor{Yoshua Bengio}{mila}
		\icmlauthor{Liam Paull}{mila}
	\end{icmlauthorlist}
	
	\icmlaffiliation{ucalgary}{University of Calgary, Canada}
	\icmlaffiliation{mila}{MILA, Universit\'e de Montr\'eal, Canada}
	\icmlaffiliation{whu}{Wuhan University, China}
	
	\icmlcorrespondingauthor{Zijun Zhang}{zijun.zhang@ucalgary.ca}
	\icmlcorrespondingauthor{Ruixiang Zhang}{ruixiang.zhang@umontreal.ca}
	
	\icmlkeywords{Machine Learning, ICML}
	
	\vskip 0.3in
	]
	
	
	
	\printAffiliationsAndNotice{}  
	
	\begin{abstract}
		Modern generative models are usually designed to match target distributions directly in the data space, where the intrinsic dimension of data can be much lower than the ambient dimension. We argue that this discrepancy may contribute to the difficulties in training generative models. We therefore propose to map both the generated and target distributions to a latent space using the encoder of a standard autoencoder, and train the generator (or decoder) to match the target distribution in the latent space. Specifically, we enforce the consistency in both the data space and the latent space with theoretically justified data and latent reconstruction losses. The resulting generative model, which we call a \emph{perceptual generative autoencoder~(PGA)}, is then trained with a maximum likelihood or variational autoencoder~(VAE) objective. With maximum likelihood, PGAs generalize the idea of reversible generative models to unrestricted neural network architectures and arbitrary number of latent dimensions. When combined with VAEs, PGAs substantially improve over the baseline VAEs in terms of sample quality. Compared to other autoencoder-based generative models using simple priors, PGAs achieve state-of-the-art FID scores on CIFAR-10 and CelebA.
	\end{abstract}
	
	\section{Introduction}
	Recent years have witnessed great interest in generative models, mainly due to the success of generative adversarial networks (GANs) \cite{goodfellow2014generative,radford2015unsupervised,karras2017progressive,brock2018large}. Despite their prevalence, the adversarial nature of GANs can lead to a number of challenges, such as unstable training dynamics and mode collapse. Since the advent of GANs, substantial efforts have been devoted to addressing these challenges \cite{salimans2016improved,arjovsky2017wasserstein,gulrajani2017improved,miyato2018spectral}, while non-adversarial approaches that are free of these issues have also gained attention. Examples include variational autoencoders (VAEs) \cite{kingma2013auto}, reversible generative models \cite{dinh2014nice,dinh2016density,kingma2018glow}, and Wasserstein autoencoders (WAEs) \cite{tolstikhin2017wasserstein}.
	
	However, non-adversarial approaches often have significant limitations. For instance, VAEs tend to generate blurry samples, while reversible generative models require restricted neural network architectures or solving neural differential equations \cite{grathwohl2018ffjord}. Furthermore, to use the change of variable formula, the latent space of a reversible model must have the same dimension as the data space, which is unreasonable considering that real-world, high-dimensional data (e.g., images) tends to lie on low-dimensional manifolds, and thus results in redundant latent dimensions and variability. Intriguingly, recent research \cite{arjovsky2017wasserstein,dai2018diagnosing} suggests that the discrepancy between the intrinsic and ambient dimensions of data also contributes to the difficulties in training GANs and VAEs.
	
	In this work, we present a novel framework for training autoencoder-based generative models, with non-adversarial losses and unrestricted neural network architectures. Given a standard autoencoder and a target data distribution, instead of matching the target distribution in the data space, we map both the generated and target distributions to a latent space using an encoder, while also minimizing the divergence between the mapped distributions. We prove, under mild assumptions, that by minimizing a form of latent reconstruction error, matching the target distribution in the latent space implies matching it in the data space. We call this framework {\em perceptual generative autoencoder (PGA)}. We show that PGAs enable training generative autoencoders with maximum likelihood, without restrictions on architectures or latent dimensionalities. In addition, when combined with VAEs, PGAs can generate sharper samples than vanilla VAEs.\footnote{Code is available at \url{https://github.com/zj10/PGA}.}
	
	We summarize our main contributions as follows:
	\begin{itemize}
		\item A training framework, PGA, for generative autoencoders is developed to match the target distribution in the latent space, which, we prove, ensures correct matching in data space.
		\item We combine PGA with the maximum likelihood objective, and remove the restrictions of reversible (flow-based) generative models on neural network architectures and latent dimensionalities.
		\item We combine PGA with the VAE objective, solving the VAE's issue of blurry samples without introducing any auxiliary models or sophisticated model architectures.
	\end{itemize}
	
	\section{Related Work}
	Autoencoder-based generative models are trained by minimizing an data reconstruction loss with regularizations. As an early approach, denoising autoencoders (DAEs) \cite{vincent2008extracting} are trained to recover the original input from an intentionally corrupted input. Then a generative model can be obtained by sampling from a Markov chain \cite{bengio2013generalized}. To sample from a decoder directly, most recent approaches resort to mapping a simple prior distribution to a data distribution using the decoder. For instance, variational autoencoders (VAEs) directly match data distributions by maximizing the evidence lower bound. In contrast, adversarial autoencoders (AAEs) \cite{makhzani2016adversarial} and Wasserstein autoencoders (WAEs) \cite{tolstikhin2017wasserstein} work in the latent space to match the aggregated posterior with the prior, either by adversarial training or by minimizing their Wasserstein distance. Inspired by AAEs and WAEs, we develop a principled approach to matching data distributions in the latent space, aiming to improve the generative performance of AAEs and WAEs \cite{rubenstein2018latent}, as well as that of VAEs \cite{rezende2018taming,dai2018diagnosing}. While previous work has explored the use of perceptual loss for a similar purpose \cite{hou2017deep}, it relies on a VGG net pre-trained on ImageNet and provides no theoretical guarantees. In our work, the encoder of an autoencoder is jointly trained, such that matching the target distribution in the latent space guarantees the matching in the data space.
	
	In a different line of work, reversible generative models \cite{dinh2014nice,dinh2016density} are developed to enable exact inference. Consequently, by the change of variables theorem, the likelihood of each data sample can be exactly computed and optimized. Recent work shows that they are capable of generating realistic images \cite{kingma2018glow}. However, to avoid expensive Jacobian determinant computations, reversible models can only be composed of restricted transformations, rather than general neural network architectures. While this restriction can be relaxed by utilizing recently developed neural ordinary differential equations \cite{chen2018neural,grathwohl2018ffjord}, they still rely on a shared dimensionality between the latent and data spaces, which remains an unnatural restriction. In this work, we use the proposed training framework to trade exact inference for unrestricted neural network architectures and arbitrary latent dimensionalities, generalizing maximum likelihood training to autoencoder-based models.
	
	\section{Methods}
	\subsection{Perceptual Generative Model}\label{sec:pga}
	Let $f_{\phi}:\mathbb{R}^{D}\rightarrow \mathbb{R}^{H}$ be the encoder parameterized by $\phi$, and $g_{\theta}:\mathbb{R}^{H}\rightarrow \mathbb{R}^{D}$ be the decoder parameterized by $\theta$. Our goal is to obtain a decoder-based generative model, which maps a simple prior distribution to a target data distribution, $\mathcal{D}$. Throughout this paper, we use $\mathcal{N}\left(\mathbf{0},\mathbf{I}\right)$ as the prior distribution. This section will introduce several related but different distributions, which are illustrated in Fig.~\ref{fig:sketch}. A summary of notations is provided in Appendix~\ref{sec:notations}.
	\begin{figure*}[t!]
		\begin{center}
			\hfill
			\subfloat[]{
				\includegraphics[width=0.33\linewidth]{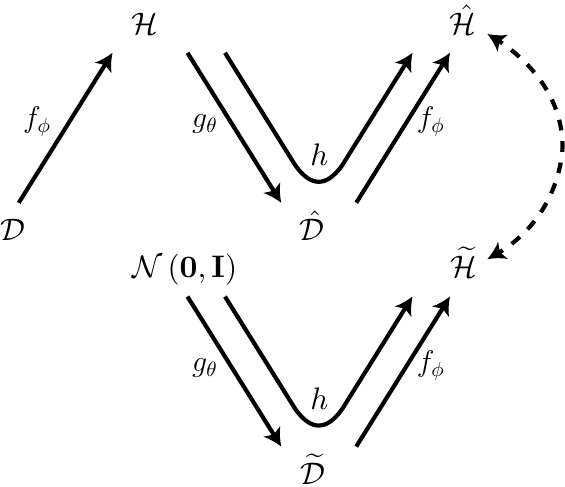}
				\label{fig:sketch}}
			\hfill
			\subfloat[]{
				\includegraphics[width=0.5\linewidth]{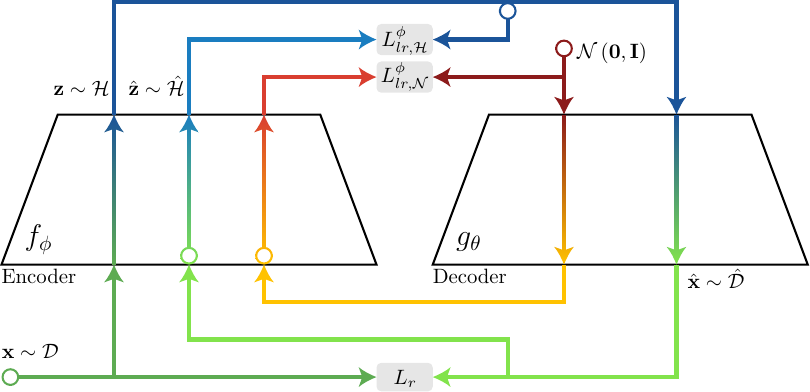}
				\label{fig:pga}}\\
			\subfloat[]{
				\includegraphics[width=0.5\linewidth]{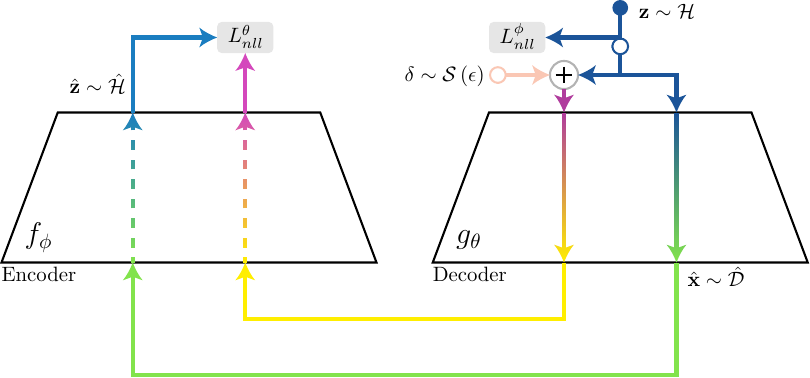}
				\label{fig:lpga}}
			\subfloat[]{
				\includegraphics[width=0.5\linewidth]{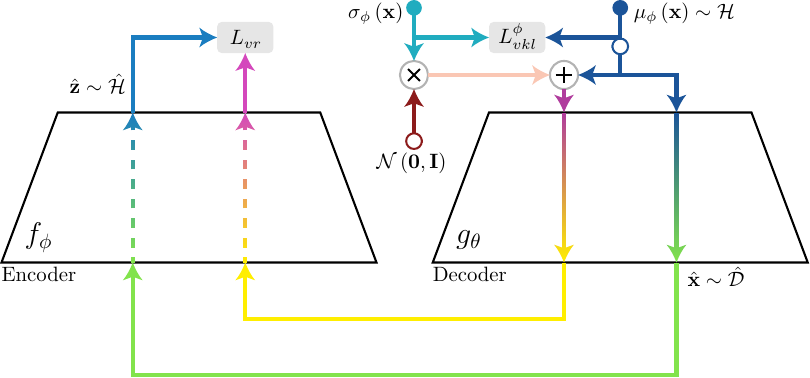}
				\label{fig:vpga}}
			\caption{Illustration of the training process of PGAs. \protect\subref{fig:sketch} shows the distributions involved in training PGAs, where the dashed arrow points to the two latent-space distributions to be matched. The overall loss function consists of \protect\subref{fig:pga} the basic PGA losses, and either \protect\subref{fig:lpga} the LPGA-specific losses or \protect\subref{fig:vpga} the VPGA-specific losses. Circles indicate where the gradient is truncated, and dashed lines indicate where the gradient is ignored when updating parameters.}
			\label{fig:train_pga}
		\end{center}
	\end{figure*}
	
	For $\mathbf{z}\in\mathbb{R}^{H}$, the output of the decoder, $g_{\theta}\left(\mathbf{z}\right)$, lies in a manifold that is at most $H$-dimensional. Therefore, if we train the autoencoder to minimize
	\begin{equation}\label{eq:L_r}
	L_{r}=\frac{1}{2}\mathbb{E}_{\mathbf{x}\sim\mathcal{D}}\left[\left\Arrowvert\hat{\mathbf{x}}-\mathbf{x}\right\Arrowvert_{2}^{2}\right],
	\end{equation}
	where $\hat{\mathbf{x}}=g_{\theta}\left(f_{\phi}\left(\mathbf{x}\right)\right)$, then $\hat{\mathbf{x}}$ can be seen as a projection of the input data, $\mathbf{x}$, onto the manifold of $g_{\theta}\left(\mathbf{z}\right)$. Let $\hat{\mathcal{D}}$ denote the reconstructed data distribution, i.e., $\hat{\mathbf{x}}\sim\hat{\mathcal{D}}$. Given enough capacity of the encoder, $\hat{\mathcal{D}}$ is the best approximation to $\mathcal{D}$ (in terms of $\ell_2$-distance), that we can obtain from the decoder, and thus can serve as a surrogate target distribution for training the decoder-based generative model.
	
	Due to the difficulty in directly matching the generated distribution with the data-space target distribution, $\hat{\mathcal{D}}$, we reuse the encoder to map $\hat{\mathcal{D}}$ to a latent-space target distribution, $\hat{\mathcal{H}}$. We then transform the problem of matching $\hat{\mathcal{D}}$ in the data space into matching $\hat{\mathcal{H}}$ in the latent space. In other words, we aim to ensure that for $\mathbf{z}\sim\mathcal{N}\left(\mathbf{0},\mathbf{I}\right)$, if $f_{\phi}\left(g_{\theta}\left(\mathbf{z}\right)\right)\sim\hat{\mathcal{H}}$, then $g_{\theta}\left(\mathbf{z}\right)\sim\hat{\mathcal{D}}$. In the following, we define $h=f_{\phi}\circ g_{\theta}$ for notational convenience.
	
	To this end, we minimize the following latent reconstruction loss w.r.t. $\phi$:
	\begin{equation}\label{eq:L_lrN}
	L_{lr,\mathcal{N}}^{\phi}=\frac{1}{2}\mathbb{E}_{\mathbf{z}\sim\mathcal{N}\left(\mathbf{0},\mathbf{I}\right)}\left[\left\Arrowvert h\left(\mathbf{z}\right)-\mathbf{z}\right\Arrowvert_{2}^{2}\right].
	\end{equation}
	Let $Z\left(\mathbf{x}\right)$ be the set of all $\mathbf{z}$'s that are mapped to the same $\mathbf{x}$ by $g_{\theta}$, we have the following theorem:
	\begin{theorem}\label{thm:enc_map}
		Assuming $\mathbb{E}\left[\mathbf{z}\given\mathbf{x}\right]\in Z\left(\mathbf{x}\right)$ for all $\mathbf{x}$ generated by $g_{\theta}$, and sufficient capacity of $f_{\phi}$; for $\mathbf{z}\sim\mathcal{N}\left(\mathbf{0},\mathbf{I}\right)$, if Eq.~\eqref{eq:L_lrN} is minimized and $h\left(\mathbf{z}\right)\sim\hat{\mathcal{H}}$, then $g_{\theta}\left(\mathbf{z}\right)\sim\hat{\mathcal{D}}$.
	\end{theorem}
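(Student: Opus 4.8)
The plan is to show that minimizing $L_{lr,\mathcal{N}}^{\phi}$ forces the optimal encoder to act as a left inverse of the decoder on the image $g(\mathbb{R}^{H})$, i.e. $g(f(\mathbf{x}))=\mathbf{x}$ for every $\mathbf{x}$ in that image, and then to transport the given latent-space equality $h(\mathbf{z})\sim\hat{\mathcal{H}}$ through $g$ to obtain $g(\mathbf{z})\sim\hat{\mathcal{D}}$. First I would rewrite the loss in terms of the joint law of $(\mathbf{x},\mathbf{z})$ with $\mathbf{z}\sim\mathcal{N}(\mathbf{0},\mathbf{I})$ and $\mathbf{x}=g(\mathbf{z})$, so that $L_{lr,\mathcal{N}}^{\phi}=\tfrac12\mathbb{E}\big[\lVert f(\mathbf{x})-\mathbf{z}\rVert_{2}^{2}\big]$. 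Because the encoder has sufficient capacity, the value $f(\mathbf{x})$ can be chosen independently for (almost) every $\mathbf{x}$, so minimizing the loss reduces to minimizing the conditional objective $\mathbb{E}\big[\lVert f(\mathbf{x})-\mathbf{z}\rVert_{2}^{2}\mid\mathbf{x}\big]$ pointwise.

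The key step is the standard fact that the pointwise minimizer of a squared error is the conditional mean, $f^{\star}(\mathbf{x})=\mathbb{E}[\mathbf{z}\mid g(\mathbf{z})=\mathbf{x}]$. The conditional law of $\mathbf{z}$ given $g(\mathbf{z})=\mathbf{x}$ is supported on the level set $Z(\mathbf{x})=g^{-1}(\mathbf{x})$, which is closed (as $g$ is continuous) and, by hypothesis, convex; hence its mean lies in $Z(\mathbf{x})$ itself. Therefore $f^{\star}(\mathbf{x})\in Z(\mathbf{x})$, which is exactly the assertion $g(f^{\star}(\mathbf{x}))=\mathbf{x}$ for every $\mathbf{x}\in g(\mathbb{R}^{H})$. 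Note that this does \emph{not} require the loss to vanish --- the residual conditional variance may be strictly positive --- only that $f^{\star}(\mathbf{x})$ land somewhere in the preimage, which convexity guarantees. This is where the convexity assumption does its essential work.

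With this left-inverse property in hand, the distributional conclusion follows by chaining pushforwards. For any $\mathbf{z}$ we have $g(\mathbf{z})\in g(\mathbb{R}^{H})$, so applying the property at $\mathbf{x}=g(\mathbf{z})$ yields the pointwise identity $g(h(\mathbf{z}))=g(f(g(\mathbf{z})))=g(\mathbf{z})$. Since $\hat{\mathcal{H}}$ is by construction the pushforward of $\hat{\mathcal{D}}$ under $f$, and every $\hat{\mathbf{x}}\sim\hat{\mathcal{D}}$ lies in $g(\mathbb{R}^{H})$ (being of the form $\hat{\mathbf{x}}=g(f(\mathbf{x}))$), the same property gives $g(f(\hat{\mathbf{x}}))=\hat{\mathbf{x}}$, whence the pushforward of $\hat{\mathcal{H}}$ under $g$ is exactly $\hat{\mathcal{D}}$. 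Combining, with $\mathbf{z}\sim\mathcal{N}(\mathbf{0},\mathbf{I})$ and $h(\mathbf{z})\sim\hat{\mathcal{H}}$, the random variable $g(\mathbf{z})=g(h(\mathbf{z}))$ has law $g_{\#}\hat{\mathcal{H}}=\hat{\mathcal{D}}$, as claimed.

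The hard part will be making the pointwise minimization rigorous. The level sets $Z(\mathbf{x})$ are generically lower-dimensional and hence Gaussian-null, so ``conditioning on $g(\mathbf{z})=\mathbf{x}$'' must be interpreted via a disintegration of the Gaussian measure along the fibers of $g$, and one must argue that the sufficient-capacity assumption genuinely permits a single measurable encoder to realize the conditional mean simultaneously on almost every fiber. I would also confirm that the conditional means are well defined (finite second moments, inherited from the Gaussian) and that $Z(\mathbf{x})$ is nonempty for the relevant $\mathbf{x}$, which holds precisely on the image of $g$. Once these measure-theoretic points are settled, the remaining pushforward bookkeeping is routine.
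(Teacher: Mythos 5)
Your proposal is correct, and its second half takes a genuinely different (and cleaner) route than the paper. Both arguments share the same opening move: sufficient capacity plus minimization of $L_{lr,\mathcal{N}}^{\phi}$ forces $f(\mathbf{x})=\mathbb{E}\left[\mathbf{z}\mid g(\mathbf{z})=\mathbf{x}\right]$, and convexity (with closedness) of the fiber $Z(\mathbf{x})$ places this conditional mean inside $Z(\mathbf{x})$. The paper, however, only extracts from this the \emph{injectivity} of $f$ on $g(\mathbb{R}^{H})$ (disjoint fibers have distinct conditional means), and then finishes by matching probability masses: it treats $\widetilde{\mathcal{D}}$ and $\hat{\mathcal{D}}$ as discrete distributions, argues $p_{\widetilde{\mathcal{H}}}(f(\mathbf{x}))=p_{\widetilde{\mathcal{D}}}(\mathbf{x})$ and $p_{\hat{\mathcal{H}}}(f(\mathbf{x}))=p_{\hat{\mathcal{D}}}(\mathbf{x})$ by injectivity, and appeals informally to a limit for the continuous case. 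You instead use the full strength of the fiber membership: $f(\mathbf{x})\in Z(\mathbf{x})$ is literally the identity $g(f(\mathbf{x}))=\mathbf{x}$ on $g(\mathbb{R}^{H})$, which is strictly stronger than injectivity (it implies it), and it lets you conclude by pure pushforward algebra --- $g(\mathbf{z})=g(h(\mathbf{z}))$ pointwise, $h(\mathbf{z})\sim\hat{\mathcal{H}}$, and $g_{\#}\hat{\mathcal{H}}=(g\circ f)_{\#}\hat{\mathcal{D}}=\hat{\mathcal{D}}$. This buys you a uniform treatment of discrete and continuous distributions with no case split and no limiting hand-wave, so your finish is arguably more rigorous than the paper's under the same idealized assumptions. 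What remains heuristic in both proofs is identical and is the point you flag yourself: that ``sufficient capacity'' lets a single measurable encoder realize the conditional mean on (almost) every fiber, where the fibers are null sets for the Gaussian and conditioning must be read as a disintegration; the paper glosses over this exactly as you do.
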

	
	We defer the proof to Appendix~\ref{sec:enc_map}. Note that Theorem~\ref{thm:enc_map} requires that different $\mathbf{x}$'s generated by $g_{\theta}$ (from $\mathcal{N}\left(\mathbf{0},\mathbf{I}\right)$ and $\mathcal{H}$) are mapped to different $\mathbf{z}$'s by $f_{\phi}$. In theory, minimizing Eq.~\eqref{eq:L_lrN} would suffice, since $\mathcal{N}\left(\mathbf{0},\mathbf{I}\right)$ is supported on the whole $\mathbb{R}^{H}$. However, there can be $\mathbf{z}$'s with low probabilities in $\mathcal{N}\left(\mathbf{0},\mathbf{I}\right)$, but with high probabilities in $\mathcal{H}$ that are not well covered by Eq.~\eqref{eq:L_lrN}. Therefore, it is sometimes helpful to minimize another latent reconstruction loss on $\mathcal{H}$:
	\begin{equation}\label{eq:L_lrH}
	L_{lr,\mathcal{H}}^{\phi}=\frac{1}{2}\mathbb{E}_{\mathbf{z}\sim\mathcal{H}}\left[\left\Arrowvert h\left(\mathbf{z}\right)-\mathbf{z}\right\Arrowvert_{2}^{2}\right].
	\end{equation}
	In practice, we observe that $L_{lr,\mathcal{H}}^{\phi}$ is often small without explicit minimization, which we attribute to its consistency with the minimization of $L_{r}$. Moreover, minimizing the latent reconstruction losses w.r.t. $\theta$ is not required by Theorem~\ref{thm:enc_map}, and it degrades the performance empirically. In addition, the use of $\ell_2$-norm in the reconstruction losses is not a necessity, and the framework can be easily extended to other norm definitions.
	
	By Theorem~\ref{thm:enc_map}, the problem of training the generative model reduces to training $h$ to map $\mathcal{N}\left(\mathbf{0},\mathbf{I}\right)$ to $\hat{\mathcal{H}}$, which we refer to as the perceptual generative model. The basic loss function of PGAs is given by
	\begin{equation}\label{eq:loss_pga}
	L_{pga}=L_{r}+\alpha L_{lr,\mathcal{N}}^{\phi}+\beta L_{lr,\mathcal{H}}^{\phi},
	\end{equation}
	where $\alpha$ and $\beta$ are hyperparameters to be tuned. Eq.~\eqref{eq:loss_pga} is also illustrated in Fig.~\ref{fig:pga}.
	
	In the subsequent subsections, we present a maximum likelihood approach, as well as a VAE-based approach to train the perceptual generative model. To build intuition before delving into the details, we note that both of these two approaches work by attracting the latent representations of data samples to the origin, while expanding the volume occupied by each sample in the latent space. These two tendencies together push $\mathcal{H}$ closer to $\mathcal{N}\left(\mathbf{0},\mathbf{I}\right)$, such that $\tilde{\mathcal{H}}$ matches $\hat{\mathcal{H}}$. This observation further leads to a unified view of the two approaches.
	
	\subsection{A Maximum Likelihood Approach}\label{sec:lpga}
	We first assume the invertibility of $h$. For $\hat{\mathbf{x}}\sim\hat{\mathcal{D}}$, let $\hat{\mathbf{z}}=f_{\phi}\left(\hat{\mathbf{x}}\right)=h\left(\mathbf{z}\right)\sim\hat{\mathcal{H}}$. We can train $h$ directly with maximum likelihood using the change of variables formula as
	{\small
		\begin{equation}\label{eq:p_zHat}
		\mathbb{E}_{\hat{\mathbf{z}}\sim\hat{\mathcal{H}}}\left[\log p\left(\hat{\mathbf{z}}\right)\right]=\mathbb{E}_{\mathbf{z}\sim\mathcal{H}}\left[\log p\left(\mathbf{z}\right)-\log\left\vert\det\left(\frac{\partial h\left(\mathbf{z}\right)}{\partial \mathbf{z}}\right)\right\vert\right],
		\end{equation}
	}\noindent
	where $p\left(\mathbf{z}\right)$ is the prior distribution, $\mathcal{N}\left(\mathbf{0},\mathbf{I}\right)$. Since the actual generative model to be trained is the decoder (parameterized by $\theta$), we would like to maximize Eq.~\eqref{eq:p_zHat} only w.r.t. $\theta$. However, directly optimizing the first term in Eq.~\eqref{eq:p_zHat} requires computing $\mathbf{z}=h^{-1}\left(\hat{\mathbf{z}}\right)$, which is usually unknown. Nevertheless, for $\hat{\mathbf{z}}\sim\hat{\mathcal{H}}$, we have $h^{-1}\left(\hat{\mathbf{z}}\right)=f_{\phi}\left(\mathbf{x}\right)$ and $\mathbf{x}\sim\mathcal{D}$, and thus we can minimize the following loss function w.r.t. $\phi$ instead:
	\begin{equation}\label{eq:nll_enc}
	L_{nll}^{\phi}=-\mathbb{E}_{\mathbf{z}\sim\mathcal{H}}\left[\log p\left(\mathbf{z}\right)\right]=\frac{1}{2}\mathbb{E}_{\mathbf{x}\sim\mathcal{D}}\left[\left\Arrowvert f_{\phi}\left(\mathbf{x}\right)\right\Arrowvert_{2}^{2}\right].
	\end{equation}
	
	To avoid computing the Jacobian in the second term of Eq.~\eqref{eq:p_zHat}, which is slow for unrestricted architectures, we approximate the Jacobian determinant and derive a loss function to be minimized w.r.t. $\theta$:
	\begin{equation}\label{eq:nll_dec}
	\begin{split}
	L_{nll}^{\theta} & =\frac{H}{2}\mathbb{E}_{\mathbf{z}\sim\mathcal{H},\delta\sim\mathcal{S}\left(\epsilon\right)}\left[\log\frac{\left\Arrowvert h\left(\mathbf{z}+\delta\right)-h\left(\mathbf{z}\right)\right\Arrowvert_{2}^{2}}{\left\Arrowvert\delta\right\Arrowvert_{2}^{2}}\right]\\
	& \approx\mathbb{E}_{\mathbf{z}\sim\mathcal{H}}\left[\log\left\vert\det\left(\frac{\partial h\left(\mathbf{z}\right)}{\partial \mathbf{z}}\right)\right\vert\right],
	\end{split}
	\end{equation}
	where $\mathcal{S}\left(\epsilon\right)$ can be either $\mathcal{N}\left(\mathbf{0},\epsilon^{2}\mathbf{I}\right)$, or a uniform distribution on a small $(H\!-\!1)$-sphere of radius $\epsilon$ centered at the origin. The latter choice is expected to introduce slightly less variance. Note that if we also minimize Eq.~\eqref{eq:nll_dec} w.r.t. $\phi$, the encoder will be trained to ignore the difference between $g_{\theta}\left(\mathbf{z}+\delta\right)$ and $g_{\theta}\left(\mathbf{z}\right)$, in which case Theorem~\ref{thm:enc_map} no longer holds.
	
	Eqs.~\eqref{eq:nll_enc} and \eqref{eq:nll_dec} are illustrated in Fig.~\ref{fig:lpga}. We show below that the approximation in Eq.~\eqref{eq:nll_dec} gives an upper bound when $\epsilon\rightarrow 0$.
	
	\begin{proposition}\label{prop:jacobian_apprx}
		For $\epsilon\rightarrow 0$,
		{\small
			\begin{equation}\label{eq:ldj_ineq}
			\log\left\vert\det\left(\frac{\partial h\left(\mathbf{z}\right)}{\partial \mathbf{z}}\right)\right\vert\leq\frac{H}{2}\mathbb{E}_{\delta\sim\mathcal{S}\left(\epsilon\right)}\left[\log\frac{\left\Arrowvert h\left(\mathbf{z}+\delta\right)-h\left(\mathbf{z}\right)\right\Arrowvert_{2}^{2}}{\left\Arrowvert\delta\right\Arrowvert_{2}^{2}}\right].
			\end{equation}
		}\noindent
		The inequality is tight if $h$ is a multiple of the identity function around $\mathbf{z}$.
	\end{proposition}
	
	We defer the proof to Appendix~\ref{sec:jacobian_apprx}. We note that while the approximation in Eq.~\eqref{eq:nll_dec} is derived from the change of variables formula, there is no direct usage of the latter. As a result, the invertibility of $h$ is not required by the resulting method. Indeed, when $h$ is invertible at some point $\mathbf{z}$, the latent reconstruction loss ensures that $h$ is close to the identity function around $\mathbf{z}$, and hence the tightness of the upper bound in Eq.~\eqref{eq:ldj_ineq}. Otherwise, when $h$ is not invertible at some $\mathbf{z}$, the logarithm of the Jacobian determinant at $\mathbf{z}$ becomes infinite, in which case Eq.~\eqref{eq:p_zHat} cannot be optimized. Nevertheless, since $\left\Arrowvert h\left(\mathbf{z}+\delta\right)-h\left(\mathbf{z}\right)\right\Arrowvert_{2}^{2}$ is unlikely to be zero if the model is properly initialized, the approximation in Eq.~\eqref{eq:nll_dec} remains finite, and thus can be optimized regardless.
	
	To summarize, we train the autoencoder to obtain a generative model by minimizing the following loss function:
	\begin{equation}\label{eq:loss_lpga}
	L_{lpga}=L_{pga}+\gamma\left(L_{nll}^{\phi}+L_{nll}^{\theta}\right).
	\end{equation}
	We refer to this approach as maximum likelihood PGA (LPGA).
	
	\subsection{A VAE-based Approach}\label{sec:vpga}
	The original VAE is trained by maximizing the evidence lower bound on $\log p\left(\mathbf{x}\right)$ as
	\begin{align}
	&\log p\left(\mathbf{x}\right)\label{eq:elbo_vae}\\
	\geq &~\log p\left(\mathbf{x}\right)-\kld{q\left(\mathbf{z}'\given\mathbf{x}\right)}{p\left(\mathbf{z}'\given\mathbf{x}\right)}\notag\\
	= &~\mathbb{E}_{\mathbf{z}'\sim q\left(\mathbf{z}'\given\mathbf{x}\right)}\left[\log p\left(\mathbf{x}\given \mathbf{z}'\right)\right]-\kld{q\left(\mathbf{z}'\given\mathbf{x}\right)}{p\left(\mathbf{z}'\right)},\notag
	\end{align}
	where $p\left(\mathbf{x}\given \mathbf{z}'\right)$ is modeled with the decoder, and $q\left(\mathbf{z}'\given\mathbf{x}\right)$ is modeled with the encoder. Note that $\mathbf{z}'$ denotes the stochastic version of $\mathbf{z}$, whereas $\mathbf{z}$ remains deterministic for the basic PGA losses in Eqs.~\eqref{eq:L_lrN} and \eqref{eq:L_lrH}.  In our case, we would like to modify Eq.~\eqref{eq:elbo_vae} in a way that helps maximize $\log p\left(\hat{\mathbf{z}}\right)$, where $\hat{\mathbf{z}}=h\left(\mathbf{z}\right)$. Therefore, we replace $p\left(\mathbf{x}\given \mathbf{z}'\right)$ on the r.h.s. of Eq.~\eqref{eq:elbo_vae} with $p\left(\hat{\mathbf{z}}\given \mathbf{z}'\right)$, and derive a lower bound on $\log p\left(\hat{\mathbf{z}}\right)$ as
	\begin{align}
	&\log p\left(\hat{\mathbf{z}}\right)\label{eq:elbo_vpga}\\
	\geq &~\log p\left(\hat{\mathbf{z}}\right)-\kld{q\left(\mathbf{z}'\given\mathbf{x}\right)}{p\left(\mathbf{z}'\given\hat{\mathbf{z}}\right)}\notag\\
	= &~\mathbb{E}_{\mathbf{z}'\sim q\left(\mathbf{z}'\given\mathbf{x}\right)}\left[\log p\left(\hat{\mathbf{z}}\given \mathbf{z}'\right)\right]-\kld{q\left(\mathbf{z}'\given\mathbf{x}\right)}{p\left(\mathbf{z}'\right)}.\notag
	\end{align}
	
	Similar to the original VAE, we make the assumption that $q\left(\mathbf{z}'\given\mathbf{x}\right)$ and $p\left(\hat{\mathbf{z}}\given \mathbf{z}'\right)$ are Gaussian; i.e., $q\left(\mathbf{z}'\given\mathbf{x}\right)=\mathcal{N}\big(\mathbf{z}'\;\big|\;\mu_{\phi}\left(\mathbf{x}\right),\diag\big(\sigma_{\phi}^{2}\left(\mathbf{x}\right)\big)\big)$, and $p\left(\hat{\mathbf{z}}\given\mathbf{z}'\right)=\mathcal{N}\left(\hat{\mathbf{z}}\given\mu_{\theta,\phi}\left(\mathbf{z}'\right),\sigma^{2}\mathbf{I}\right)$. Here, $\mu_{\phi}\left(\cdot\right)=f_{\phi}\left(\cdot\right)$, $\mu_{\theta,\phi}\left(\cdot\right)=h\left(\cdot\right)$, and $\sigma>0$ is a tunable scalar. Note that if $\sigma$ is fixed, the first term on the r.h.s. of Eq.~\eqref{eq:elbo_vpga} has a trivial maximum, where $\mathbf{z}$, $\hat{\mathbf{z}}$, and $\mu_{\theta,\phi}\left(\mathbf{z}'\right)$ are all close to zero. To circumvent this, we set $\sigma$ proportional to the $\ell_2$-norm of $\mathbf{z}$.
	
	The VAE variant is trained by minimizing
	{\small
		\begin{align}
		& L_{vae}=L_{vr}+L_{vkl}^{\phi}\label{eq:loss_vae}\\
		=&-\mathbb{E}_{\mathbf{x}\sim\mathcal{D}}\left[\mathbb{E}_{\mathbf{z}'\sim q\left(\mathbf{z}'\given\mathbf{x}\right)}\left[\log p\left(\hat{\mathbf{z}}\given \mathbf{z}'\right)\right]-\kld{q\left(\mathbf{z}'\given\mathbf{x}\right)}{p\left(\mathbf{z}'\right)}\right],\notag
		\end{align}
	}\noindent
	where $L_{vr}$ and $L_{vkl}^{\phi}$ correspond, respectively, to the reconstruction and KL divergence losses of VAE, as illustrated in Fig.~\ref{fig:vpga}. In $L_{vr}$, while the gradient through $\sigma_{\phi}^{2}\left(\mathbf{x}\right)$ remains unchanged, we ignore the gradient passed directly from $L_{vr}$ to the encoder, due to a similar reason discussed for Eq.~\eqref{eq:nll_dec}. Accordingly, the overall loss function is given by
	\begin{equation}\label{eq:loss_vpga}
	L_{vpga}=L_{pga}+\eta L_{vae}.
	\end{equation}
	We refer to this approach as variational PGA (VPGA).
	
	\subsection{A High-level View of the PGA Framework}
	We summarize what each loss term achieves, and explain from a high-level how they work together.
	
	\textbf{Data reconstruction loss} (Eq.~\eqref{eq:L_r}): For Theorem~\ref{thm:enc_map} to hold, we need to use the reconstructed data distribution ($\hat{\mathcal{D}}$), instead of the original data distribution ($\mathcal{D}$), as the target distribution. Therefore, minimizing the data reconstruction loss ensures that the target distribution is close to the data distribution.
	
	\textbf{Latent reconstruction loss} (Eqs.~\eqref{eq:L_lrN} and \eqref{eq:L_lrH}): The encoder ($f_{\phi}$) is reused to map data-space distributions to the latent space. As shown by Theorem~\ref{thm:enc_map}, minimizing the latent reconstruction loss (w.r.t. the parameters of the encoder) ensures that if the generated distribution and the target distribution can be mapped to the same distribution ($\hat{\mathcal{H}}$) in the latent space by the encoder, then the generated distribution and the target distribution are the same.
	
	\textbf{Maximum likelihood loss} (Eqs.~\eqref{eq:nll_enc} and \eqref{eq:nll_dec}) or \textbf{VAE loss} (Eq.~\eqref{eq:loss_vae}): The decoder ($g_{\theta}$) and encoder ($f_{\phi}$) together can be considered as a perceptual generative model ($f_{\phi}\circ g_{\theta}$), which is trained to map $\mathcal{N}\left(\mathbf{0},\mathbf{I}\right)$ to the latent-space target distribution ($\hat{\mathcal{H}}$) by minimizing either the maximum likelihood loss or the VAE loss.
	
	The first loss allows to use the reconstructed data distribution as the target distribution. The second loss transforms the problem of matching the target distribution in the data space into matching it in the latent space. The latter problem is then solved by the third loss. Therefore, the three losses together ensure that the generated distribution is close to the data distribution.
	
	\subsection{A Unified Approach}
	While the loss functions of maximum likelihood and VAE seem completely different in their original forms, they share remarkable similarities when considered in the PGA framework (see Figs.~\ref{fig:lpga} and \ref{fig:vpga}). Intuitively, observe that
	\begin{equation}\label{eq:loss_vkl}
	L_{vkl}^{\phi}=L_{nll}^{\phi}+\frac{1}{2}\mathbb{E}_{\mathbf{x}\sim\mathcal{D}}\sum_{i\in\left[H\right]} \left[\sigma_{\phi,i}^{2}\left(\mathbf{x}\right)-\log\left(\sigma_{\phi,i}^{2}\left(\mathbf{x}\right)\right)\right],
	\end{equation}
	which means both $L_{nll}^{\phi}$ and $L_{vkl}^{\phi}$ tend to attract the latent representations of data samples to the origin. In addition, by minimizing $\log\left\vert\det\left(\partial h\left(\mathbf{z}\right)/\partial \mathbf{z}\right)\right\vert$, $L_{nll}^{\theta}$ expands the volume occupied by each sample in the latent space, which can be also achieved by $L_{vr}$ with the second term of Eq.~\eqref{eq:loss_vkl}.
	
	More concretely, we observe that both $L_{nll}^{\theta}$ and $L_{vr}$ are minimizing the difference between $h\left(\mathbf{z}\right)$ and $h\left(\mathbf{z}+\delta'\right)$, where $\delta'$ is some additive zero-mean noise. However, they differ in that the variance of $\delta'$ is fixed for $L_{nll}^{\theta}$, but is trainable for $L_{vr}$; and the distance between $h\left(\mathbf{z}\right)$ and $h\left(\mathbf{z}+\delta'\right)$ are defined in two different ways. In fact, $L_{vr}$ is a squared $\ell_2$-distance derived from the Gaussian assumption on $\hat{\mathbf{z}}$, whereas $L_{nll}^{\theta}$ can be derived similarly by assuming that $d^{H}=\left\Arrowvert \hat{\mathbf{z}}-h\left(\mathbf{z}+\delta\right)\right\Arrowvert_{2}^{H}$ follows a reciprocal distribution as
	\begin{equation}\label{eq:recip_dist}
	p\left(d^{H};a,b\right)=\frac{1}{d^{H}\left(\log\left(b\right)-\log\left(a\right)\right)},
	\end{equation}
	where $a\leq d^{H}\leq b$, and $a>0$. The exact values of $a$ and $b$ are irrelevant, as they only appear in an additive constant when we take the logarithm of $p\left(d^{H};a,b\right)$.
	
	Since there is no obvious reason for assuming Gaussian $\hat{\mathbf{z}}$, we can instead assume $\hat{\mathbf{z}}$ to follow the distribution defined in Eq.~\eqref{eq:recip_dist}, and multiply $H$ by a tunable scalar, $\gamma'$, similar to $\sigma$. Furthermore, we can replace $\delta$ in Eq.~\eqref{eq:nll_dec} with $\delta'\sim \mathcal{N}\big(\mathbf{0},\diag\big(\sigma_{\phi}^{2}\left(\mathbf{x}\right)\big)\big)$, as it is defined for VPGA with a subtle difference that here $\sigma_{\phi}^{2}\left(\mathbf{x}\right)$ is constrained to be greater than $\epsilon^2$. As a result, LPGA and VPGA are unified into a single approach, which has a combined loss function as
	\begin{equation}\label{eq:loss_lvpga}
	L_{lvpga}=L_{pga}+\gamma'L_{vr}+\gamma L_{nll}^{\phi}+\eta L_{vkl}^{\phi}.
	\end{equation}
	When $\gamma'=\gamma$ and $\eta=0$, Eq.~\eqref{eq:loss_lvpga} is equivalent to Eq.~\eqref{eq:loss_lpga}, considering that $\sigma_{\phi}^{2}\left(\mathbf{x}\right)$ will be optimized to approach $\epsilon^2$. Similarly, when $\gamma=0$, Eq.~\eqref{eq:loss_lvpga} is equivalent to Eq.~\eqref{eq:loss_vpga}. Interestingly, it also becomes possible to have a mix of LPGA and VPGA by setting all three hyperparameters to positive values. This approach mainly serves to demonstrate the connection between LPGA and VPGA, and is less practical due to the extra hyperparameters. We refer to this approach as LVPGA.
	
	\section{Experiments}
	In this section, we evaluate the performance of LPGA and VPGA on three image datasets, MNIST \cite{lecun1998mnist}, CIFAR-10 \cite{krizhevsky2009learning}, and CelebA \cite{liu2015faceattributes}. For CelebA, we employ the discriminator and generator architecture of DCGAN \cite{radford2015unsupervised} for the encoder and decoder of PGA. We half the number of filters (i.e., $64$ filters for the first convolutional layer) for faster experiments, while more filters are observed to improve performance. See Appendix~\ref{sec:more_results} for results on larger models. Due to smaller input sizes, we reduce the number of convolutional layers accordingly for MNIST and CIFAR-10, and add a fully-connected layer of $1024$ units for MNIST, as done in \citet{chen2016infogan}. SGD with a momentum of $0.9$ is used to train all models. Other hyperparameters are tuned heuristically, and could be improved by a more extensive grid search. For fair comparison, $\sigma$ is tuned for both VAE and VPGA. All experiments are performed on a single GPU.
	
	\begin{table}[h]
		\caption{FID scores of autoencoder-based generative models. The first block shows the results from \citet{ghosh2019variational}, where CV-VAE stands for constant-variance VAE, and RAE stands for regularized autoencoder. The second block shows our results of LPGA, VPGA, LVPGA, and VAE.}
		\begin{centering}
			\resizebox{\columnwidth}{!}{
				\begin{tabular}{m{3.7em}>{}m{5.2em}>{}m{5.4em}>{}m{5.2em}}
					\toprule 
					Model & MNIST & CIFAR-10 & CelebA\tabularnewline
					\midrule
					VAE & $19.21$ & $106.37$ & $48.12$\tabularnewline
					CV-VAE & $33.79$ & $94.75$ & $48.87$\tabularnewline
					WAE & $20.42$ & $117.44$ & $53.67$\tabularnewline
					RAE-L$2$ & $22.22$ & $80.80$ & $51.13$\tabularnewline
					RAE-SN & $19.67$ & $84.25$ & $44.74$\tabularnewline
					\midrule
					VAE & $13.83\pm 0.06$ & $115.74\pm 0.63$ & $43.60\pm 0.33$\tabularnewline
					LPGA & $10.34\pm 0.15$ & $55.87\pm 0.25$ & $14.53\pm 0.52$\tabularnewline
					VPGA & $\mathbf{4.97}\pm 0.07$ & $\mathbf{51.51}\pm 1.16$ & $24.73\pm 1.25$\tabularnewline
					LVPGA & $6.32\pm 0.16$ & $52.94\pm 0.89$ & $\mathbf{13.80}\pm 0.20$\tabularnewline
					\bottomrule
				\end{tabular}
			}
			\par\end{centering}
		\label{tab:fid}
	\end{table}
	
	\begin{figure*}[ht!]
		\begin{center}
			\subfloat[MNIST by LPGA]{
				\includegraphics[width=0.32\textwidth]{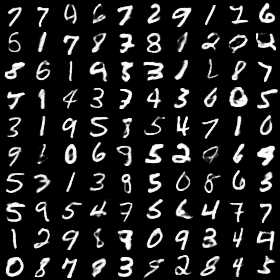}
				\label{fig:mnist-lpga}}
			\subfloat[MNIST by VPGA]{
				\includegraphics[width=0.32\textwidth]{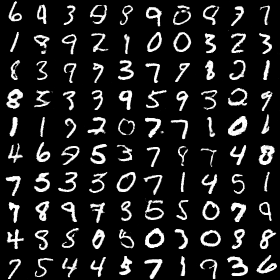}
				\label{fig:mnist-vpga}}
			\subfloat[MNIST by VAE]{
				\includegraphics[width=0.32\textwidth]{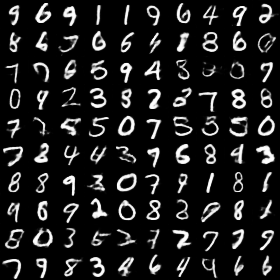}
				\label{fig:mnist-vae}}\\
			\subfloat[CIFAR-10 by LPGA]{
				\includegraphics[width=0.32\textwidth]{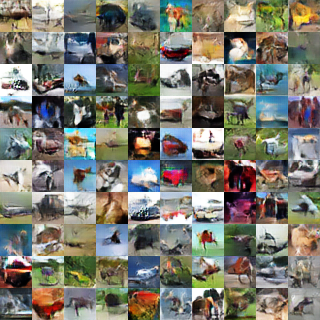}
				\label{fig:cifar10-lpga}}
			\subfloat[CIFAR-10 by VPGA]{
				\includegraphics[width=0.32\textwidth]{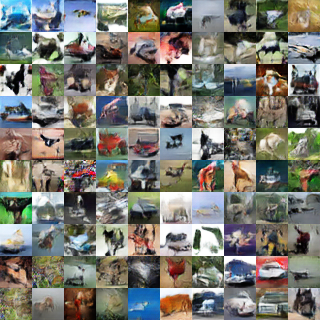}
				\label{fig:cifar10-vpga}}
			\subfloat[CIFAR-10 by VAE]{
				\includegraphics[width=0.32\textwidth]{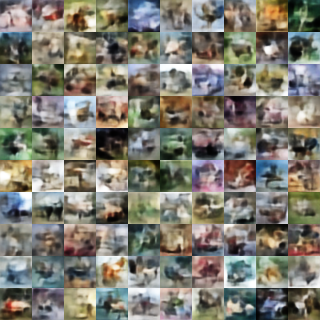}
				\label{fig:cifar10-vae}}\\
			\subfloat[CelebA by LPGA]{
				\includegraphics[width=0.32\textwidth]{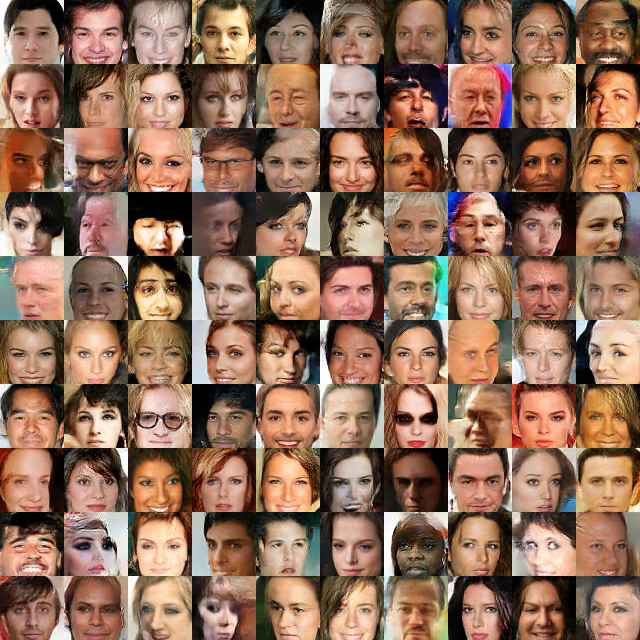}
				\label{fig:celeba-lpga}}
			\subfloat[CelebA by VPGA]{
				\includegraphics[width=0.32\textwidth]{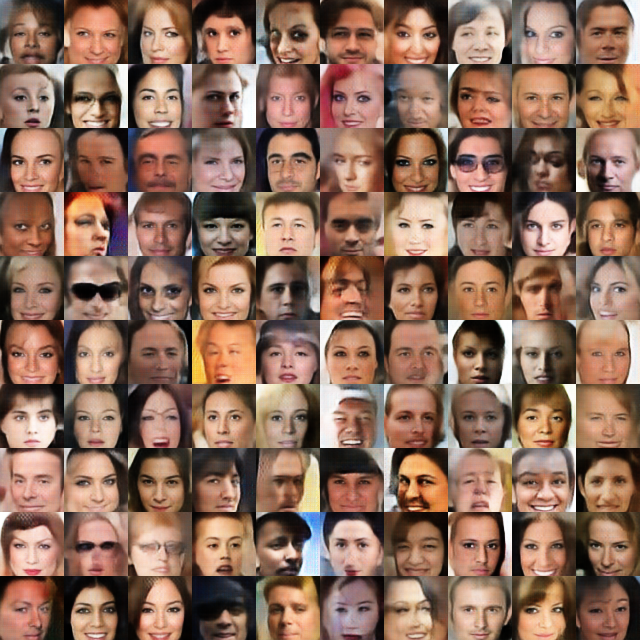}
				\label{fig:celeba-vpga}}
			\subfloat[CelebA by VAE]{
				\includegraphics[width=0.32\textwidth]{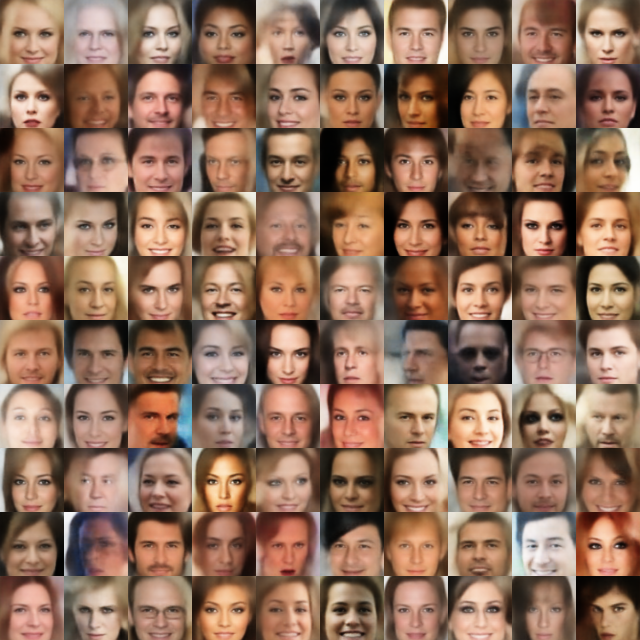}
				\label{fig:celeba-vae}}
			\caption{Random samples generated by LPGA, VPGA, and VAE. Note how LPGA and VPGA images are less blurry than those from the VAE.}
			\label{fig:gen_samples}
		\end{center}
	\end{figure*}
	
	As shown in Fig.~\ref{fig:gen_samples}, the visual quality of the PGA-generated samples is significantly improved over that of VAEs. In particular, PGAs generate much sharper samples on CIFAR-10 and CelebA compared to vanilla VAEs. The results of LVPGA much resemble that of either LPGA or VPGA, depending on the hyperparameter settings. In addition, we use the Fr\'echet Inception Distance (FID) \cite{heusel2017gans} to evaluate the proposed methods, as well as VAE. For each model and each dataset, we take 5,000 generated samples to compute the FID score. The results (with standard errors of 3 or more runs) are summarized in Table.~\ref{tab:fid}. Compared to other autoencoder-based non-adversarial approaches \cite{tolstikhin2017wasserstein,kolouri2018sliced,ghosh2019variational}, where similar but larger architectures are used, we obtain substantially better FID scores on all three datasets. Note that the results from \citet{ghosh2019variational} shown in Table.~\ref{tab:fid} are obtained using slightly different architectures and evaluation protocols. Nevertheless, their results of VAE align well with ours, suggesting a good comparability of the results. Interestingly, as a unified approach, LVPGA can indeed combine the best performances of LPGA and VPGA on different datasets. For CelebA, we show further results on 140x140 crops and latent space interpolations in Appendix~\ref{sec:more_results}. While PGA has largely bridged the performance gap between generative autoencoders and GANs, there is still a noticeable gap between them especially on CIFAR-10. For instance, the FIDs of WGAN-GP and SN-GAN on CIFAR-10 using a similar architecture are respectively $40.2$ and $25.5$ \cite{miyato2018spectral}, as compared to $51.5$ of VPGA.
	
	\begin{figure}[h]
		\subfloat[Total loss]{
			\hspace*{-5mm}\includegraphics[width=1.1\columnwidth]{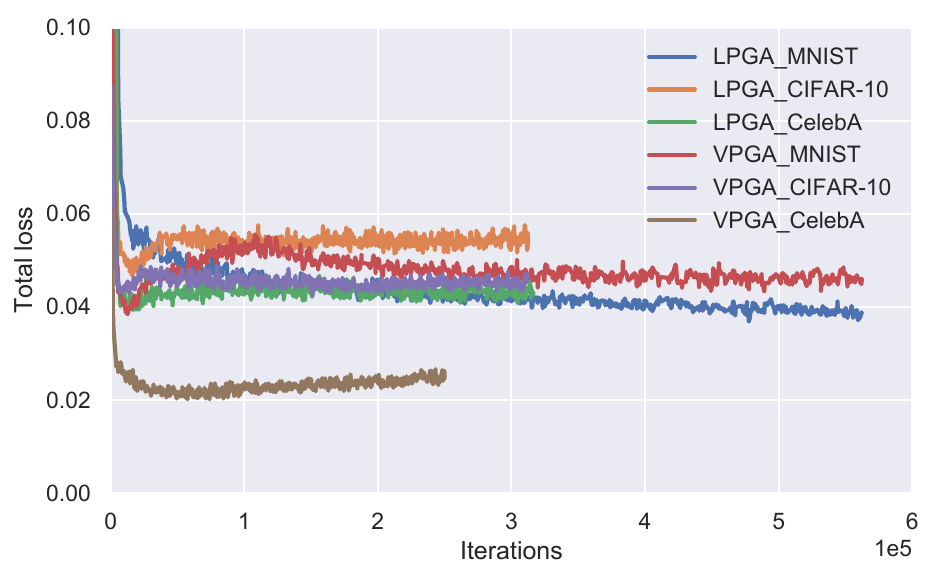}
			\label{fig:loss_curves}}\\
		\subfloat[FID]{
			\hspace*{-5mm}\includegraphics[width=1.1\columnwidth]{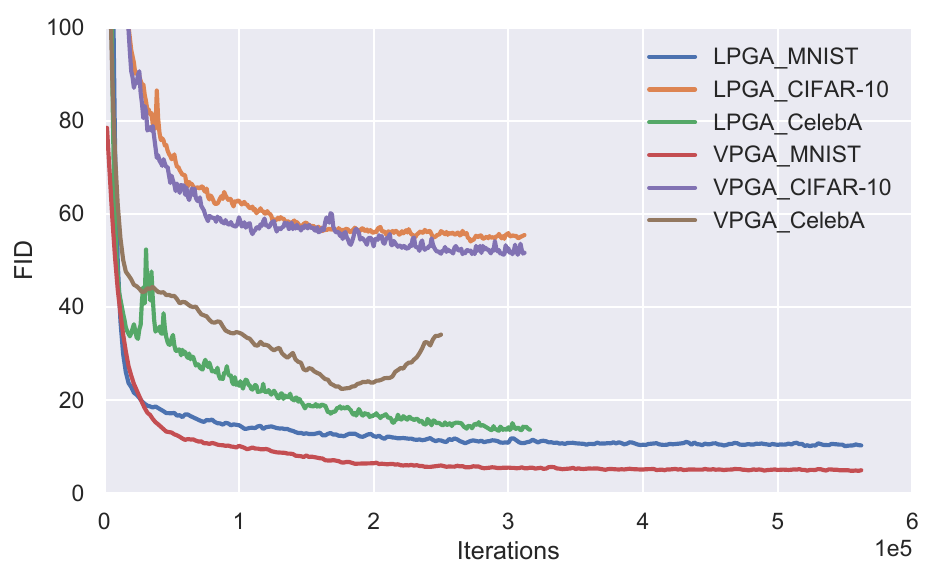}
			\label{fig:fid_curves}}
		\caption{Training curves of LPGA and VPGA.}
		\label{fig:train_curves}
	\end{figure}
	
	Empirically, different PGA variants share the same optimal values of $\alpha$ and $\beta$ (Eq.~\eqref{eq:loss_pga}) when trained on the same dataset. For LPGA, $\gamma$ (Eq.~\eqref{eq:loss_lpga}) tends to vary in a small range for different datasets (e.g., \num{1.5e-2} for MNIST and CIFAR-10, and \num{1e-2} for CelebA). For VPGA, $\eta$ (Eq.~\eqref{eq:loss_vpga}) can vary widely (e.g., \num{2e-2} for MNIST, \num{3e-2} for CIFAR-10, and \num{2e-3} for CelebA), and thus is slightly more difficult to tune. The training process of PGAs is stable in general, given the non-adversarial losses. As shown in Fig.~\ref{fig:loss_curves}, the total losses change little after the initial rapid drops. This is due to the fact that the encoder and decoder are optimized towards different objectives, as can be observed from Eqs.~\eqref{eq:loss_pga}, \eqref{eq:loss_lpga}, and \eqref{eq:loss_vae}. In contrast, the corresponding FIDs, shown in Fig.~\ref{fig:fid_curves}, tend to decrease monotonically during training. However, when trained on CelebA, there is a significant performance gap between LPGA and VPGA, and the FID of the latter starts to increase after a certain point of training. We suspect this phenomenon is related to the limited expressiveness of the variational posterior, which is not an issue for LPGA.
	
	It is worth noting that stability issues can occur when batch normalization \cite{ioffe2015batch} is introduced, since both the encoder and decoder are fed with multiple batches drawn from different distributions. At convergence, different input distributions to the decoder (e.g., $\mathcal{H}$ and $\mathcal{N}\left(\mathbf{0},\mathbf{I}\right)$) are expected to result in similar distributions of the internal representations, which, intriguingly, can be imposed to some degree by batch normalization. Therefore, it is observed that when batch normalization does not cause stability issues, it can substantially accelerate convergence and lead to slightly better generative performance. Furthermore, we observe that LPGA tends to be more stable than VPGA in the presence of batch normalization.
	
	\begin{figure*}[h]
		\begin{center}
			\subfloat[MNIST, $\text{FID}=16.99$]{
				\includegraphics[width=0.32\textwidth]{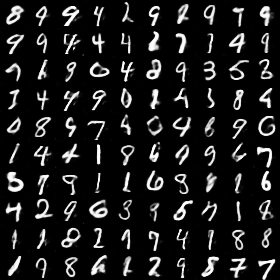}
				\label{fig:mnist-lpga-noLlr}}
			\subfloat[CIFAR-10, $\text{FID}=114.19$]{
				\includegraphics[width=0.32\textwidth]{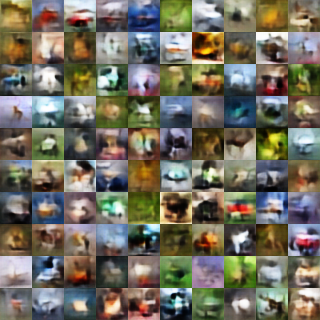}
				\label{fig:cifar10-lpga-noLlr}}
			\subfloat[CelebA, $\text{FID}=48.02$]{
				\includegraphics[width=0.32\textwidth]{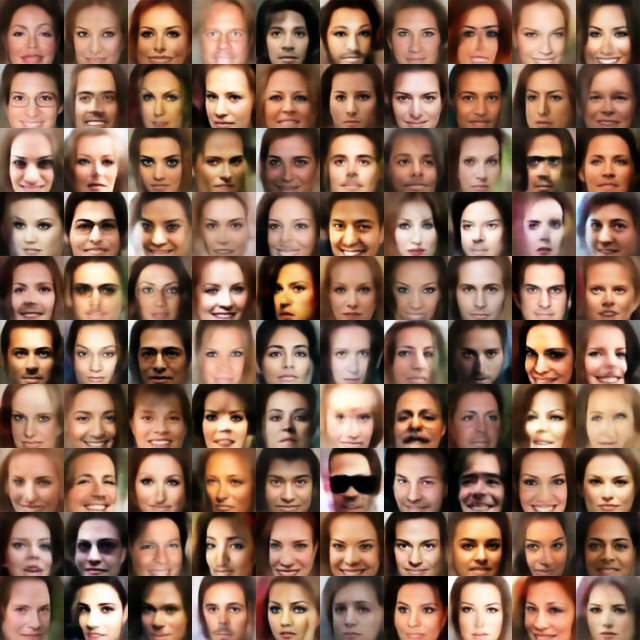}
				\label{fig:celeba-lpga-noLlr}}
			\caption{Random samples generated by LPGA without the latent reconstruction losses ($\alpha=\beta=0$). Compared
				to the samples in Fig.~\ref{fig:gen_samples}, we observe a degradation.}
			\label{fig:gen_samples-noLlr}
		\end{center}
	\end{figure*}
	
	Finally, we conduct an ablation study. While the loss functions of LPGA and VPGA both consist of multiple components, they are all theoretically motivated and indispensable. Specifically, the data reconstruction loss minimizes the discrepancy between the input data and its reconstruction. Since the reconstructed data distribution serves as the surrogate target distribution, removing the data reconstruction loss will result in a random target. Moreover, removing the maximum likelihood loss of LPGA or the VAE loss of VPGA will leave the perceptual generative model untrained. In both cases, no valid generative model can be obtained. Nevertheless, it is interesting to see how the latent reconstruction loss contributes to the generative performance. Therefore, we retrain the LPGAs without the latent reconstruction loss and report the results in Fig.~\ref{fig:gen_samples-noLlr}. Compared to Fig.~\ref{fig:mnist-lpga}, \ref{fig:cifar10-lpga}, \ref{fig:celeba-lpga}, and the results in Table~\ref{tab:fid}, the performance significantly degrades both visually and quantitatively, confirming the importance of the latent reconstruction loss.
	
	\section{Conclusion}
	We proposed a framework, PGA, for training autoencoder-based generative models, with non-adversarial losses and unrestricted neural network architectures. By matching target distributions in the latent space, PGAs trained with maximum likelihood generalize the idea of reversible generative models to unrestricted neural network architectures and arbitrary latent dimensionalities. In addition, it improves the performance of VAE when combined together. Under the PGA framework, we further show that maximum likelihood and VAE can be unified into a single approach.
	
	In principle, the PGA framework can be combined with any method that can train the perceptual generative model. While we have only considered non-adversarial approaches, an interesting future work would be to combine it with an adversarial discriminator trained on latent representations. Moreover, the compatibility issue with batch normalization deserves further investigation.

	%
	%
	%
	%
	
	\bibliography{references}
	\bibliographystyle{icml2020}
	
	\newpage
	\appendix
	\section{Notations}\label{sec:notations}
	\begin{table}[h]
		\caption{Notations and definitions}
		\begin{center}
			\begin{tabular}{|c|m{180pt}|}
				\hline
				$f_{\phi}$/$g_{\theta}$ & encoder/decoder of an autoencoder\\
				\hline
				$h$ & $h=f_{\phi}\circ g_{\theta}$\\
				\hline
				$\phi$/$\theta$ & parameters of the encoder/decoder\\
				\hline
				$D$/$H$ & dimensionality of the data/latent space\\
				\hline
				$\mathcal{D}$ & distribution of data samples denoted by $\mathbf{x}$\\
				\hline
				$\mathcal{H}$ & distribution of $f_{\phi}\left(\mathbf{x}\right)$ for $\mathbf{x}\sim \mathcal{D}$\\
				\hline
				$\hat{\mathcal{D}}$ & distribution of $\hat{\mathbf{x}}=g_{\theta}\left(f_{\phi}\left(\mathbf{x}\right)\right)$ for $\mathbf{x}\sim \mathcal{D}$\\
				\hline
				$\hat{\mathcal{H}}$ & distribution of $\hat{\mathbf{z}}=h\left(\mathbf{z}\right)$ for $\mathbf{z}\sim \mathcal{H}$\\
				\hline
				$\tilde{\mathcal{D}}$ & distribution of $g_{\theta}\left(\mathbf{z}\right)$ for $\mathbf{z}\sim \mathcal{N}\left(\mathbf{0},\mathbf{I}\right)$\\
				\hline
				$\tilde{\mathcal{H}}$ & distribution of $h\left(\mathbf{z}\right)$ for $\mathbf{z}\sim \mathcal{N}\left(\mathbf{0},\mathbf{I}\right)$\\
				\hline
				$L_{r}$ & standard reconstruction loss of the autoencoder\\
				\hline
				$L_{lr,\mathcal{N}}^{\phi}$ & latent reconstruction loss of PGA for $\mathbf{z}\sim \mathcal{N}\left(\mathbf{0},\mathbf{I}\right)$, minimized w.r.t. $\phi$\\
				\hline
				$L_{lr,\mathcal{H}}^{\phi}$ & latent reconstruction loss of PGA for $\mathbf{z}\sim \mathcal{H}$, minimized w.r.t. $\phi$\\
				\hline
				$L_{nll}^{\phi}$ & part of the negative log-likelihood loss of LPGA, minimized w.r.t. $\phi$\\
				\hline
				$L_{nll}^{\theta}$ & part of the negative log-likelihood loss of LPGA, minimized w.r.t. $\theta$\\
				\hline
				$L_{vr}$ & VAE reconstruction loss of VPGA\\
				\hline
				$L_{vkl}$ & VAE KL-divergence loss of VPGA\\
				\hline
				$L_{vae}$ & $L_{vae}=L_{vr}+L_{vkl}$, VAE loss of VPGA\\
				\hline
			\end{tabular}
		\end{center}
	\end{table}
	
	\section{Proofs}
	\subsection{Theorem~\ref{thm:enc_map}}\label{sec:enc_map}
	\begin{proof}[Proof sketch.]
		We first show that any different $\mathbf{x}$'s generated by $g_{\theta}$ are mapped to different $\mathbf{z}$'s by $f_{\phi}$. Let $\mathbf{x}_{1}=g_{\theta}\left(\mathbf{z}_{1}\right)$, $\mathbf{x}_{2}=g_{\theta}\left(\mathbf{z}_{2}\right)$, and $\mathbf{x}_{1}\neq \mathbf{x}_{2}$. Since $f_{\phi}$ has sufficient capacity and Eq.~\eqref{eq:L_lrN} is minimized, we have $f_{\phi}\left(\mathbf{x}_{1}\right)=\mathbb{E}\left[\mathbf{z}_{1}\given\mathbf{x}_1\right]$ and $f_{\phi}\left(\mathbf{x}_{2}\right)=\mathbb{E}\left[\mathbf{z}_{2}\given\mathbf{x}_2\right]$. By assumption, $f_{\phi}\left(\mathbf{x}_{1}\right)\in Z\left(\mathbf{x}_{1}\right)$ and $f_{\phi}\left(\mathbf{x}_{2}\right)\in Z\left(\mathbf{x}_{2}\right)$. Therefore, since $Z\left(\mathbf{x}_{1}\right)\cap Z\left(\mathbf{x}_{2}\right)=\varnothing$, we have $f_{\phi}\left(\mathbf{x}_{1}\right)\neq f_{\phi}\left(\mathbf{x}_{2}\right)$.
		
		For $\mathbf{z}\sim\mathcal{N}\left(\mathbf{0},\mathbf{I}\right)$, denote the distributions of $g_{\theta}\left(\mathbf{z}\right)$ and $h\left(\mathbf{z}\right)$, respectively, by $\widetilde{\mathcal{D}}$ and $\widetilde{\mathcal{H}}$. We then consider the case where $\widetilde{\mathcal{D}}$ and $\hat{\mathcal{D}}$ are discrete distributions. If $g_{\theta}\left(\mathbf{z}\right)\nsim\hat{\mathcal{D}}$, then there exists an $\mathbf{x}$ that is generated by $g_{\theta}$, such that $p_{\widetilde{\mathcal{H}}}\left(f_{\phi}\left(\mathbf{x}\right)\right)=p_{\widetilde{\mathcal{D}}}\left(\mathbf{x}\right)\neq p_{\hat{\mathcal{D}}}\left(\mathbf{x}\right)=p_{\hat{\mathcal{H}}}\left(f_{\phi}\left(\mathbf{x}\right)\right)$, contradicting that $h\left(\mathbf{z}\right)\sim\hat{\mathcal{H}}$. The result still holds when $\widetilde{\mathcal{D}}$ and $\hat{\mathcal{D}}$ approach continuous distributions, in which case $\widetilde{\mathcal{D}}=\hat{\mathcal{D}}$ almost everywhere.
	\end{proof}
	
	\subsection{Proposition~\ref{prop:jacobian_apprx}}\label{sec:jacobian_apprx}
	\begin{proof}
		Let $\mathbf{J}\left(\mathbf{z}\right)=\partial h\left(\mathbf{z}\right)/\partial \mathbf{z}$, $\mathbf{P}=\begin{bmatrix}\delta_1 & \delta_2 & \cdots & \delta_H\end{bmatrix}$, and $	\hat{\mathbf{P}}=\mathbf{J}\left(\mathbf{z}\right)\mathbf{P}=\begin{bmatrix}\hat{\delta}_1 & \hat{\delta}_2 & \cdots & \hat{\delta}_H\end{bmatrix}$, where $\Delta=\left\{\delta_1,\delta_2,\dots,\delta_H\right\}$ is an orthogonal set of $H$-dimensional vectors. Since $\det\left(\hat{\mathbf{P}}\right)=\det\left(\mathbf{J}\left(\mathbf{z}\right)\right)\det\left(\mathbf{P}\right)$, we have
		\begin{equation}\label{eq:ldj}
		\log\left|\det\left(\mathbf{J}\left(\mathbf{z}\right)\right)\right|=\log\left|\det\left(\hat{\mathbf{P}}\right)\right|-\log\left|\det\left(\mathbf{P}\right)\right|.
		\end{equation}
		By the geometric interpretation of determinants, the volume of the parallelotope spanned by $\Delta$ is
		\begin{equation}\label{eq:vol_delta}
		\Vol\left(\Delta\right)=\left|\det\left(\mathbf{P}\right)\right|=\prod_{i\in\left[H\right]}\left\Arrowvert\delta_i\right\Arrowvert_{2},
		\end{equation}
		where $\left[H\right]=\left\{1,2,\dots,H\right\}$. While $\hat{\Delta}=\left\{\hat{\delta}_1,\hat{\delta}_2,\dots,\hat{\delta}_H\right\}$ is not necessarily an orthogonal set, an upper bound on $\Vol\left(\hat{\Delta}\right)$ can be derived in a similar fashion. Let $\hat{\Delta}_{k}=\left\{\hat{\delta}_1,\hat{\delta}_2,\dots,\hat{\delta}_k\right\}$, and $a_k$ be the included angle between $\hat{\delta}_k$ and the plane spanned by $\hat{\Delta}_{k-1}$. We have
		\begin{equation}
		\begin{split}
		\Vol\left(\hat{\Delta}_{2}\right) &=\left\Arrowvert\hat{\delta}_1\right\Arrowvert_{2}\left\Arrowvert\hat{\delta}_2\right\Arrowvert_{2}\sin a_2,\\
		\text{and }
		\Vol\left(\hat{\Delta}_{k}\right) &=\Vol\left(\hat{\Delta}_{k-1}\right)\left\Arrowvert\hat{\delta}_k\right\Arrowvert_{2}\sin a_k.
		\end{split}
		\end{equation}
		Given fixed $\left\Arrowvert\hat{\delta}_k\right\Arrowvert_{2},\forall k\in\left[H\right]$, $\Vol\left(\hat{\Delta}_{2}\right)$ is maximized when $a_2=\pi/2$, i.e., $\hat{\delta}_1$ and $\hat{\delta}_2$ are orthogonal; and $\Vol\left(\hat{\Delta}_{k}\right)$ is maximized when $\Vol\left(\hat{\Delta}_{k-1}\right)$ is maximized and $a_k=\pi/2$. By induction on $k$, we can conclude that $\Vol\left(\hat{\Delta}\right)$ is maximized when $\hat{\Delta}=\hat{\Delta}_{H}$ is an orthogonal set, and therefore
		\begin{equation}\label{eq:vol_delta_hat}
		\Vol\left(\hat{\Delta}\right)=\left|\det\left(\hat{\mathbf{P}}\right)\right|\leq\prod_{i\in\left[H\right]}\left\Arrowvert\hat{\delta}_i\right\Arrowvert_{2}.
		\end{equation}
		Combining Eq.~\eqref{eq:ldj} with Eqs.~\eqref{eq:vol_delta} and \eqref{eq:vol_delta_hat}, we obtain
		\begin{equation}\label{eq:ldj_ub}
		\log\left|\det\left(\mathbf{J}\left(\mathbf{z}\right)\right)\right|\leq\sum_{i\in\left[H\right]}\left(\log\left\Arrowvert\hat{\delta}_i\right\Arrowvert_{2}-\log\left\Arrowvert\delta_i\right\Arrowvert_{2}\right).
		\end{equation}
		
		We proceed by randomizing $\Delta$. Let $\Delta_{k}=\left\{\delta_1,\delta_2,\dots,\delta_k\right\}$. We inductively construct an orthogonal set, $\Delta=\Delta_{H}$. In step $1$, $\delta_1$ is sampled from $\mathcal{S}\left(\epsilon\right)$, a uniform distribution on a $(H\!-\!1)$-sphere of radius $\epsilon$, $S\left(\epsilon\right)$, centered at the origin of an $H$-dimensional space. In step $k$, $\delta_k$ is sampled from $\mathcal{S}\left(\epsilon;\Delta_{k-1}\right)$, a uniform distribution on an $(H\!-\!k)$-sphere, $S\left(\epsilon;\Delta_{k-1}\right)$, in the orthogonal complement of the space spanned by $\Delta_{k-1}$. Step $k$ is repeated until $H$ mutually orthogonal vectors are obtained.
		
		Obviously, when $k=H-1$, for all $j>k$ and $j\leq H$, $p\left(\delta_j\given\Delta_{k}\right)=p\left(\delta_j\given\Delta_{H-1}\right)=\mathcal{S}\left(\delta_j\given\epsilon;\Delta_{H-1}\right)=\mathcal{S}\left(\delta_j\given\epsilon;\Delta_{k}\right)$. When $1\leq k<H$, assuming for all $j>k$ and $j\leq H$, $p\left(\delta_j\given\Delta_{k}\right)=\mathcal{S}\left(\delta_j\given\epsilon;\Delta_{k}\right)$, we get
		\begin{equation}\label{eq:cond_p_delta}
		p\left(\delta_j\given\Delta_{k-1}\right)=\int\limits_{S\left(\epsilon;\Delta_{k-1}\cup\left\{\delta_j\right\}\right)} p\left(\delta_k\given\Delta_{k-1}\right)p\left(\delta_j\given\Delta_{k}\right)d\delta_k,
		\end{equation}
		where $S\left(\epsilon;\Delta_{k-1}\cup\left\{\delta_j\right\}\right)$ is in the orthogonal complement of the space spanned by $\Delta_{k-1}\cup\left\{\delta_j\right\}$. Since $p\left(\delta_k\given\Delta_{k-1}\right)$ is a constant on $S\left(\delta_k\given\epsilon;\Delta_{k-1}\right)$, and $S\left(\epsilon;\Delta_{k-1}\cup\left\{\delta_j\right\}\right)\subset S\left(\epsilon;\Delta_{k-1}\right)$, $p\left(\delta_k\given\Delta_{k-1}\right)$ is also a constant on $S\left(\epsilon;\Delta_{k-1}\cup\left\{\delta_j\right\}\right)$. In addition, $\delta_k\in S\left(\epsilon;\Delta_{k-1}\cup\left\{\delta_j\right\}\right)$ implies that $\delta_j\in S\left(\epsilon;\Delta_{k}\right)$, on which $p\left(\delta_j\given\Delta_{k}\right)$ is also a constant. Then it follows from Eq.~\eqref{eq:cond_p_delta} that, for all $\delta_j\in S\left(\epsilon;\Delta_{k-1}\right)$, $p\left(\delta_j\given\Delta_{k-1}\right)$ is a constant. Therefore, for all $j>k-1$ and $j\leq H$, $p\left(\delta_j\given\Delta_{k-1}\right)=\mathcal{S}\left(\delta_j\given\epsilon;\Delta_{k-1}\right)$. By backward induction on $k$, we conclude that the marginal probability density of $\delta_k$, for all $k\in\left[H\right]$, is $p\left(\delta_k\right)=\mathcal{S}\left(\delta_k\given\epsilon\right)$.
		
		Since Eq.~\eqref{eq:ldj_ub} holds for any randomly (as defined above) sampled $\Delta$, we have
		\begin{equation}\label{eq:ldj_randub}
		\begin{split}
		\log\left|\det\left(\mathbf{J}\left(\mathbf{z}\right)\right)\right| & \leq\mathbb{E}_{\Delta}\left[\sum_{i\in\left[H\right]}\left(\log\left\Arrowvert\hat{\delta}_i\right\Arrowvert_{2}-\log\left\Arrowvert\delta_i\right\Arrowvert_{2}\right)\right]\\
		& =H\mathbb{E}_{\delta\sim\mathcal{S}\left(\epsilon\right)}\left[\log\left\Arrowvert\hat{\delta}\right\Arrowvert_{2}-\log\left\Arrowvert\delta\right\Arrowvert_{2}\right].
		\end{split}
		\end{equation}
		If $h$ is a multiple of the identity function around $\mathbf{z}$, then $\mathbf{J}\left(\mathbf{z}\right)=C\mathbf{I}$, where $C\in\mathbb{R}$ is a constant. In this case, $\hat{\Delta}$ becomes an orthogonal set as $\Delta$, and therefore the inequalities in Eqs.~\eqref{eq:vol_delta_hat}, \eqref{eq:ldj_ub}, and \eqref{eq:ldj_randub} become tight. Furthermore, it is straightforward to extend the above result to the case $\delta\sim\mathcal{N}\left(\mathbf{0},\epsilon^{2}\mathbf{I}\right)$, considering that $\mathcal{N}\left(\mathbf{0},\epsilon^{2}\mathbf{I}\right)$ is a mixture of $\mathcal{S}\left(\epsilon\right)$ with different $\epsilon$'s.
		
		The Taylor expansion of $h$ around $\mathbf{z}$ gives
		\begin{equation}
		h\left(\mathbf{z}+\delta\right)=h\left(\mathbf{z}\right)+\mathbf{J}\left(\mathbf{z}\right)\delta+\mathcal{O}\left(\delta^2\right).
		\end{equation}
		Therefore, for $\delta\rightarrow\mathbf{0}$ or $\epsilon\rightarrow 0$, we have $\hat{\delta}=\mathbf{J}\left(\mathbf{z}\right)\delta=h\left(\mathbf{z}+\delta\right)-h\left(\mathbf{z}\right)$. The result follows.
	\end{proof}
	
	\section{More Results on CIFAR-10 and CelebA}\label{sec:more_results}
	Empirically, PGAs can benefit from larger models especially on difficult tasks. To show this, we adopt the ResNet architectures used for CIFAR-10 in \cite{miyato2018spectral}, and increase the filter size to $512$. The resulting samples are presented in Fig.~\ref{fig:cifar10-lpga-resnet}.
	\begin{figure}[h]
		\begin{center}
			\includegraphics[width=0.75\linewidth]{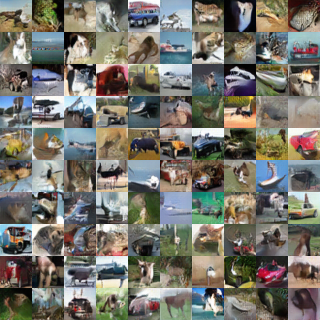}
			\caption{Random CIFAR-10 samples generated by LPGA using a ResNet. $\text{FID}=31.36$.}
			\label{fig:cifar10-lpga-resnet}
		\end{center}
	\end{figure}

	In Fig.~\ref{fig:celeba_crop140}, we compare the generated samples and FID scores of LPGA and VAE on 140x140 crops. In this experiment, we use the full DCGAN architecture (i.e., $128$ filters for the first convolutional layer) for both LPGA and VAE. Other hyperparameter settings remain the same as for 108x108 crops. In Fig.~\ref{fig:celeba_intp}, we show latent space interpolations of CelebA samples.
	
	\begin{figure}[h]
		\begin{center}
			\subfloat[LPGA, $\text{FID}=21.35$]{
				\includegraphics[width=0.75\columnwidth]{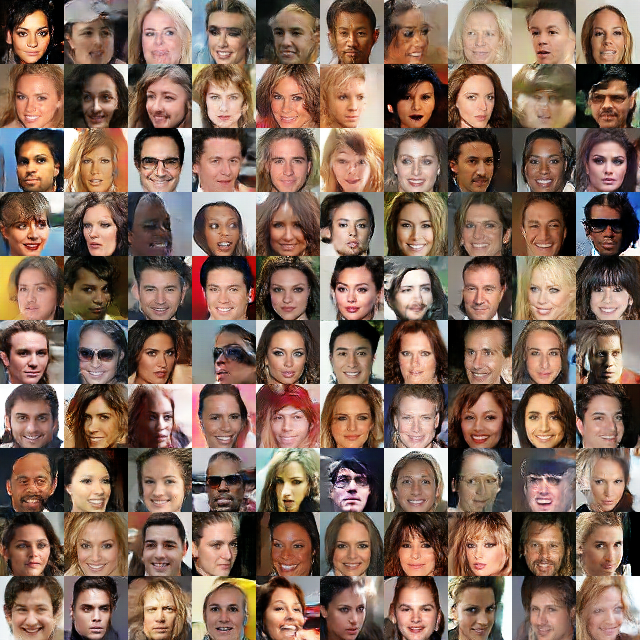}
				\label{fig:celeba_crop140-lpga}}\\
			\subfloat[VAE, $\text{FID}=54.25$]{
				\includegraphics[width=0.75\columnwidth]{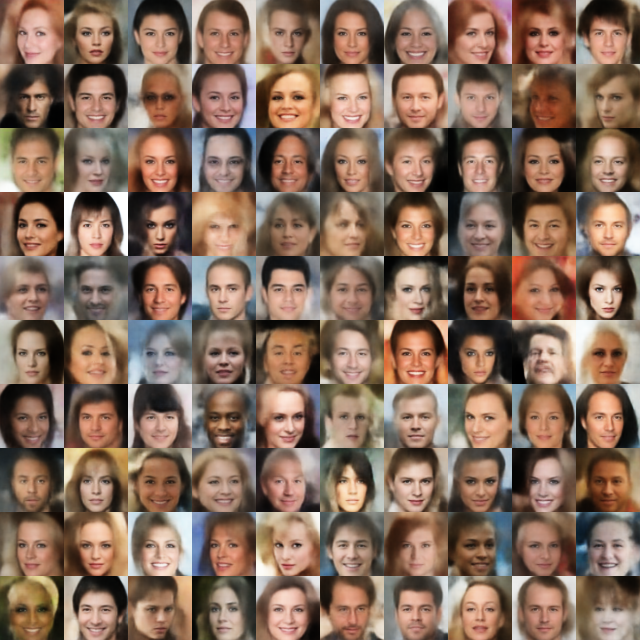}
				\label{fig:celeba_crop140-vae}}
			\caption{Random CelebA (140x140 crops) samples generated by LPGA and VAE.}
			\label{fig:celeba_crop140}
		\end{center}
	\end{figure}
	\begin{figure*}[h]
		\begin{center}
			\subfloat[Interpolations generated by LPGA.]{
				\includegraphics[width=0.8\textwidth]{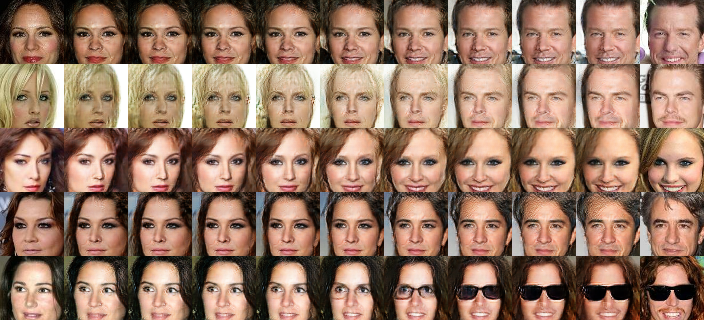}
				\label{fig:celeba_intp-lpga}}\\
			\subfloat[Interpolations generated by VPGA.]{
				\includegraphics[width=0.8\textwidth]{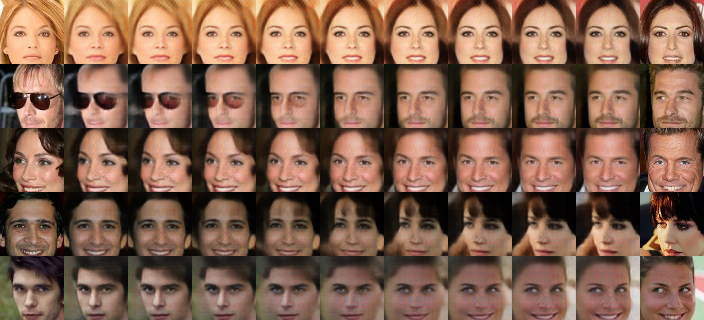}
				\label{fig:celeba_intp-vpga}}\\
			\subfloat[Interpolations generated by VAE.]{
				\includegraphics[width=0.8\textwidth]{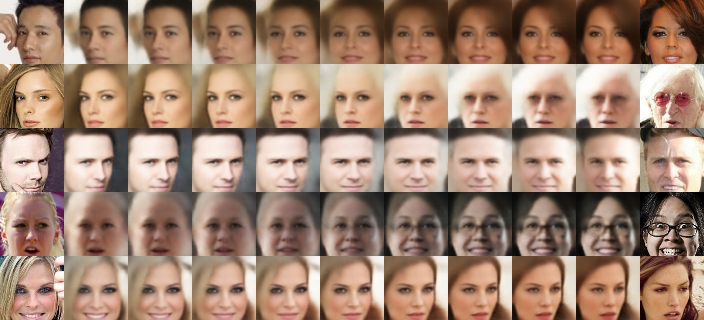}
				\label{fig:celeba_intp-vae}}
		\end{center}
		\caption{Latent space interpolations on CelebA.}
		\label{fig:celeba_intp}
	\end{figure*}

\end{document}